\documentclass{article} % For LaTeX2e
\usepackage{iclr2026_conference}
\usepackage{times}

\usepackage{amsmath,amsfonts,bm}

\def\eqref#1{equation~\ref{#1}}
\def\1{\bm{1}}

\DeclareMathAlphabet{\mathsfit}{\encodingdefault}{\sfdefault}{m}{sl}
\SetMathAlphabet{\mathsfit}{bold}{\encodingdefault}{\sfdefault}{bx}{n}

\newcommand{\E}{\mathbb{E}}

\newcommand{\R}{\mathbb{R}}

\newcommand{\KL}{D_{\mathrm{KL}}}

\usepackage{hyperref}
\usepackage{url}
\usepackage{enumitem}
\usepackage{amsmath,amssymb,amsthm,mathtools}
\usepackage{bm}
\usepackage{hyperref}
\usepackage{cleveref}
\usepackage{algorithm}
\usepackage{algorithmic}
\usepackage{booktabs}
\usepackage{xcolor}
\usepackage{subfig}
\usepackage{float}
\usepackage{tikz}
\usetikzlibrary{arrows.meta,positioning,shapes.geometric,fit,calc}

\newtheorem{theorem}{Result}

\newtheorem{corollary}{Corollary}
\newtheorem{definition}{Definition}
\newtheorem{assumption}{Assumption}
\newtheoremstyle{remarkstyle}  % name of the style
  {3pt}                         % space above
  {3pt}                         % space below
  {\normalfont}                 % body font (upright)
  {}                            % indent amount
  {\bfseries}                   % heading font (bold)
  {.}                           % punctuation after heading
  { }                           % space after heading
  {}                            % heading specification

\theoremstyle{remarkstyle}

\newcommand{\Prob}{\mathbb{P}}
\newcommand{\tr}{\mathrm{tr}}

\newcommand{\Mc}{\mathcal{M}_c}
\newcommand{\Mh}{\mathcal{M}_h}
\newcommand{\Ac}{\mathbf{A}_c}
\newcommand{\Ah}{\mathbf{A}_h}
\newcommand{\bDelta}{\boldsymbol{\Delta}}
\newcommand{\bSigma}{\boldsymbol{\Sigma}}
\newcommand{\bxi}{\boldsymbol{\xi}}
\newcommand{\bz}{\mathbf{z}}
\newcommand{\by}{\mathbf{y}}
\newcommand{\bx}{\mathbf{x}}
\newcommand{\bP}{\mathbf{P}}
\newcommand{\bG}{\mathbf{G}}
\newcommand{\bI}{\mathbf{I}}

\newcommand{\norm}[1]{\left\|#1\right\|}
\newcommand{\mnorm}[2]{\left\|#1\right\|_{#2}}

\DeclareMathOperator{\TV}{TV}

\title{Guarantees on Dynamical System\\ Distinguishability for LLM Token Generation}

\author{Mohamed Akrout\thanks{Equal contribution}~~, Dan Wilson$^*$ \\
Department of Electrical Engineering and Computer Science\\
University of Tennessee, Knoxville, TN, USA\\
\texttt{\{mohamed.akrout,dan.wilson\}@tennessee.edu}
}

\iclrfinalcopy % Uncomment for camera-ready version, but NOT for submission.
\begin{document}

\maketitle

    \begin{abstract}
    Recent work has shown that classifying large language models (LLMs)' responses can be distinguished by modeling token embeddings as trajectories of a black-box dynamical system (DS) and comparing prediction residuals of two DSs. Despite the empirical success of this dynamical approach, a theoretical understanding of \emph{why} it works, \emph{how well} it scales as a function of the token sequence, and \emph{when} it transfers across embedding models remains lacking. We address these questions by formalizing the classification task as a binary hypothesis test between two stochastic linear DSs. We show that the total variation distance between the stationary marginal distributions of the two DSs can be arbitrarily small even when the dynamics differ substantially, which provides a fundamental accuracy floor for any classifier that ignores token dynamics. We then show that the misclassification probability of DS-based classification decays exponentially in the sequence length~$L$, with the decay governed by a \emph{dynamical discriminability} quantity $\delta^2$ that captures the spectral distance between the two DSs. We also characterize cross-embedding generalization by introducing an \emph{approximate intertwining} condition between embedding models and establishing a lower bound on the transferable discriminability in terms of the intertwining map's smallest singular value. Together, these results explain the empirical performance of DS-based classification and motivate further investigation into using DS theory to analyze AI systems, in contrast to the more common approach of using AI to model dynamical systems.\vspace{-0.2cm}
    \end{abstract}
    
    \section{Introduction}
    The problem of distinguishing between two dynamical systems from their output trajectories arises in numerous applications, including change-point detection in control systems~\citep{xin2025online}, fault diagnosis in engineering~\citep{sheikhi2025fault}, and, more recently, hallucination detection in Large Language Models (LLMs) \citep{wilson2026low}. In the LLM context, recent work has demonstrated that factual and hallucinated text can be distinguished by modeling the evolution of token embeddings as trajectories of a black-box dynamical system and fitting separate Koopman operator approximations for each regime \citep{wilson2026low}.
    
    \begin{figure}[h!]
        \vspace{-0.2cm}
        \subfloat[Multiple LLM responses]{{
        \includegraphics[width=0.54\linewidth]{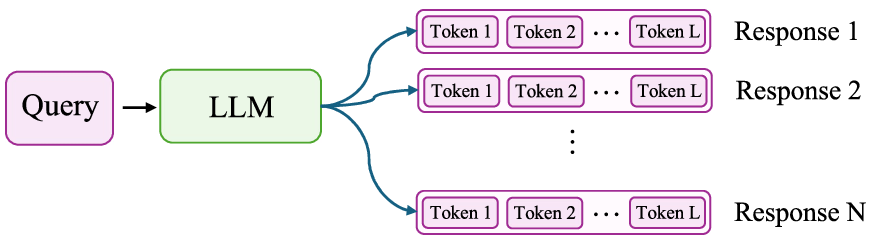}
        \label{fig:multi-response}}}
        \vspace{0.1cm}
        \subfloat[token dynamics of one LLM response]{{
        \includegraphics[width=0.44\linewidth]{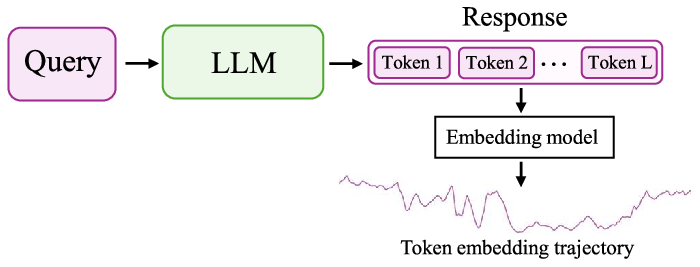}
        \label{fig:response-dynamics}}}%    
        \caption{The two approaches to analyze the properties of LLM responses: (a) multiple responses where tokens are assumed i.i.d. within responses, and (b) one single response where the token dynamics is considered.}
        \label{fig:multiple-vs-single-response}
    \end{figure}
    
    Existing approaches for classifying LLM responses (e.g., distinguishing safe from unsafe, or correct from hallucinated) require multiple responses to the same query as shown in Fig. \ref{fig:multi-response} in order to make a reliable decision \cite{lewis2020retrieval,manakul2023selfcheckgpt,zhang2023sac3,dhuliawala2024chain,meng2022locating}. This is because they treat the tokens within each generated sentence as independent and identically distributed (i.i.d.), and therefore ignore the embedding evolution of the tokens across the response. As a result, these methods must aggregate statistics over several independent generations to compensate for the loss of token dynamics, which increases both latency and API costs. By accounting for the rich token dynamics of a single LLM response in an embedding space as shown in Fig. \ref{fig:response-dynamics}.

    Despite empirical demonstrations that token embedding dynamics enable LLM response classification \citep{wilson2026low}, the theory behind this remains unexplored. This raises fundamental questions that we address in this work:
    \begin{enumerate}[leftmargin=*]
        \item[$i)$] \textit{Why do dynamics matter?} If the marginal distributions of token embeddings differ between two response categories, why is a dynamical model necessary?
        \item[$ii)$] \textit{How well does the dynamical system method work?} What is the classification error as a function of sequence length of the response, and what dynamical system quantities govern the classification performance?
        \item[$iii)$] \textit{Are embedding dynamics generalizable?} To what extent do token embeddings transfer across embedding models, and what conditions govern their cross-model generalization? From a DS perspective, embedding transferability is a striking and unexpected finding. This is because, at a fundimental level, data driven methods for Koopman operator approximation depend on the specific observables used in the model fitting.  Even minor permutations to the ordering of the observable coordinates can ruin the predictive ability; switching to entirely different embedding model represents an even more drastic transformation. The fact that discriminative information survives such a change requires a theoretical explanation and has practical implications for reducing the cost of fitting models across different embedding backends.
    \end{enumerate}
    
    In this paper, we provide rigorous answers to all three questions. Our contributions are:
    \begin{itemize}[leftmargin=*]
        \item \textbf{Result~\ref{thm:static_limit} (Limitation of marginal density analysis):} We show that two stable DSs can have completely different dynamics yet produce stationary distributions that are nearly indistinguishable. This means that any classifier relying only on marginal statistics has a fundamental accuracy floor and cannot do better than random guessing when the distributions are too similar. This explains why static features like SVD modes often overlap for different embedding trajectory classes.
        
        \item \textbf{Result~\ref{thm:dynamical_sep} (Dynamical separability):} We prove that a classifier based on prediction residuals can achieve misclassification error that decays exponentially with sequence length, with the rate decay determined by how dynamically distinct the two DSs are and the noise level. This breaks the accuracy ceiling of classifers with marginal density analysis obtained in Result \ref{thm:static_limit}. We also show that reliable classification with error probabilty $\epsilon$ requires a number of tokens $L^*=\mathcal{O}\big(\log\frac{1}{\epsilon}\big)$, which explains why richer embedding models already achieve high accuracy with shorter sequences.
        \item \textbf{Result~\ref{prop:cross} (Cross-embedding transfer):} We prove that dynamical systems fitted on one embedding can be transferred to another, provided the two embedding spaces share similar geometric structure described by an affine transformation. The lower-bound on the ability to discriminate between classes is dependent on the singular value of the transformation matrix and the norm of the operator difference of the two DSs. This explains why cross-embedding classification performs above chance even without retraining.
    \end{itemize} 
    
    \textit{Notation}: We use bold lowercase for vectors ($\by, \bz$), bold uppercase for matrices ($\mathbf{A}, \bP$), and calligraphic letters for manifolds and sets ($\Mc, \Mh$). The operator $\norm{\cdot}$ denotes the Euclidean 2-norm for vectors and the spectral norm for matrices; $\norm{\cdot}_F$ denotes the Frobenius norm. For a positive definite matrix $\mathbf{M}$, we write $\mnorm{\bx}{\mathbf{M}} = \sqrt{\bx^\top\mathbf{M}\,\bx}$ for the Mahalanobis norm of $\bx$. We write $\tr(\cdot)$ for the trace, $\lambda_{\max}(\cdot)$ and $\lambda_{\min}(\cdot)$ for the largest and smallest eigenvalues of a symmetric matrix, $\sigma_{\min}(\cdot)$ for the smallest singular value, and $\rho(\cdot)$ for the spectral radius. We use $\mathcal{O}(\cdot)$ to denote asymptotic upper bounds and $\Theta(\cdot)$ for asymptotically tight bounds in both directions, both up to constant factors.
    
    \section{Prior work}
    \textbf{Distinguishability of dynamical systems.}  The problem of distinguishing between dynamical systems from observed data has a rich history in DS theory and signal processing. Classical system identification theory \citep{ljung1999system} addresses the problem of estimating system parameters from input-output data, while the related problem of \emph{hypothesis testing} between candidate models has been studied in the context of fault detection and isolation (\cite{sheikhi2025data}). In the linear setting, the question of whether two systems can be distinguished from output trajectories is closely tied to the notion of \emph{observability} (\cite{kailath1980linear}): two systems with distinct dynamics but identical output distributions are observationally indistinguishable. Recent work has provided finite-sample guarantees for learning linear dynamical systems from a single trajectory (\cite{simchowitz2018learning,sarkar2019near}), establishing that the sample complexity depends on the spectral radius and the signal-to-noise ratio of the system. Online change-point detection between linear dynamical regimes has been studied in (\cite{xin2025online}), who provide finite-sample guarantees for detecting switches between system parameters. Our work differs from these in that we characterize the error as a function of the dynamical discriminability between two dynamical systems.
    
    \textbf{Dynamical systems for neural networks.} The perspective of treating neural network inference as a dynamical system has gained traction in recent years. Neural Ordinary Differential Equations (\cite{chen2018neural}) established a formal connection between residual networks and continuous-time dynamical systems, while Koopman operator theory has been applied to analyze and predict the evolution of nonlinear systems from data (\cite{budisic2012applied,mezic2013analysis,will15}). In the LLM context,, token embedding sequences of LLM responses were treated as trajectories of a black-box dynamical system and fitted to Koopman-based linear models for factual and hallucinated regimes \cite{wilson2026low}, enabling single-response classification. However, the theoretical foundations of this approach remain unclear. The present work fills this gap.
    
    \section{Problem setup and outline}
    
    \subsection{Dynamical systems for classification}\label{sec:DS-for-classification}
    We consider a discrete-time system generating a sequence of observables $\by_1, \by_2, \ldots, \by_L \in \R^d$ through an embedding map applied to the output of a generative model (e.g., an LLM). The observables are obtained by projecting the raw embeddings onto the top $d$ singular value decomposition (SVD) modes:
    \begin{equation}
        \by_k = \boldsymbol{\Phi}^\top H(\bx_k) \in \R^d,
    \end{equation}
    where $\bx_k$ is the internal state of the generative model, $H$ is the composition of the token selection and embedding maps, and $\boldsymbol{\Phi} \in \R^{M \times d}$ is the SVD projection matrix with $M$ being the raw embedding dimension. Following the Extended Dynamic Mode Decomposition (EDMD) framework~\citep{will15}, we lift the observables to a higher-dimensional space:
    \begin{equation}
        \bz_k = \begin{bmatrix} \by_k \\ f_{\text{lift}}(\by_k) \end{bmatrix} \in \R^{d+\gamma},
    \end{equation}
    where $f_{\text{lift}}: \R^d \to \R^\gamma$ is a nonlinear lifting function (e.g., polynomial combinations of the observable components).  A set of snapshot pairs $s_k = (\mathbf{z}_k,\mathbf{z}_{k+1})$ is collected and arranged into matrices $\bm{X} = \begin{bmatrix} \mathbf{z}_1 & \dots & \mathbf{z}_{q}  \end{bmatrix}$ and $\bm{X}^+ = \begin{bmatrix} \mathbf{z}_2 & \dots & \mathbf{z}_{q+1}  \end{bmatrix}$ where $q$ is the number of snapshot pairs used. An approximation of the Koopman operator $\mathbf{z}_{k+1} = \mathbf{A} \mathbf{z}_k$ can be obtained according to $\mathbf{A} = \mathbf{X}^+  \mathbf{X}^\dagger$, where $^\dagger$ denotes the pseudoinverse.  A prediction for the evolution of the token embeddings can be obtained according to $\tilde{\mathbf{y}}_{k+1} = \begin{bmatrix} \bm{I}  & \bm{0} \end{bmatrix}  \bm{A} \bm{z}_k$ where $\bm{I} \in \mathbb{R}^{M \times M}$ denotes the identity matrix, $\bm{0} \in \mathbb{R}^{M \times \gamma}$ is a matrix of zeros.  
    
    In this work, we obtain distinct approximations of the Koopman operator, denoted $\mathbf{A}_1$ and $\mathbf{A}_2$, from data belonging to two different classes. In the LLM context, and without loss of generality, these correspond to the approximated operators $\mathbf{A}_h$ and $\mathbf{A}_c$ which we estimate from hallucinated and correct responses, respectively. Given an arbitrary LLM response, with sequences of token embeddings $\bz_1, \ldots, \bz_L$, the prediction errors under each model are:
    \begin{align}
        \epsilon_{c,k} &\equiv \norm{\by_{k+1} - \bP \Ac \bz_k}, \\
        \epsilon_{h,k} &\equiv \norm{\by_{k+1} - \bP \Ah \bz_k},
    \end{align}
    where $\bP = [\bI_d \;\; \mathbf{0}_{d \times \gamma}]$ extracts the observable components. The cumulative squared errors are:
    \begin{equation}
        S_c = \sum_{k=1}^{L-1} \epsilon_{c,k}^2, \qquad S_h = \sum_{k=1}^{L-1} \epsilon_{h,k}^2,
    \end{equation}
    and the differential residual score is:
    \begin{equation} \label{eq:resid}
        \Delta\mathcal{E} = \sqrt{S_h} - \sqrt{S_c}.
    \end{equation}
    The decision rule classifies the trajectory as hallucinated if $\Delta\mathcal{E} < \eta$ for a classification threshold $\eta$.

    \subsection{Classification as a hypothesis testing problem}
    
    We model the classification task as a binary hypothesis test. Under each hypothesis, the lifted observables evolve according to a stochastic linear dynamical system:
    \begin{align}
        \mathcal{H}_c: \quad \bz_{k+1} &= \Ac \bz_k + \bxi_k^{(c)}, \qquad \bxi_k^{(c)} \sim \mathcal{N}(\mathbf{m}_c, \bSigma_c), \label{eq:Hc} \\
        \mathcal{H}_h: \quad \bz_{k+1} &= \Ah \bz_k + \bxi_k^{(h)}, \qquad \bxi_k^{(h)} \sim \mathcal{N}(\mathbf{m}_h, \bSigma_h), \label{eq:Hh}
    \end{align}
    where $\Ac, \Ah \in \R^{(d+\gamma) \times (d+\gamma)}$ are the finite-dimensional approximations of the Koopman operator for the correct (factual) and hallucinated regimes, respectively. The noise terms $\bxi_k^{(c)}$ and $\bxi_k^{(h)}$ capture both the approximation error of the finite-dimensional Koopman truncation and any genuine stochasticity in the generation process. We allow each regime to have its own noise mean ($\mathbf{m}_c \in \R^{d+\gamma}$ or $\mathbf{m}_h \in \R^{d+\gamma}$) and its own noise covariance ($\bSigma_c$ or $\bSigma_h$), reflecting the fact that the fidelity of the linear approximation may differ between factual and hallucinated dynamics.
    
    \begin{assumption}[Stability]
    \label{ass:stability}
    Both systems are stable: $\rho(\Ac) < 1$ and $\rho(\Ah) < 1$.
    \end{assumption}
    
    \begin{assumption}[Gaussian noise]
    \label{ass:gaussian}
    The noise sequences $\{\bxi_k^{(c)}\}$ and $\{\bxi_k^{(h)}\}$ are i.i.d.\ Gaussian with means $\mathbf{m}_c, \mathbf{m}_h \in \R^{d+\gamma}$ and positive definite covariance matrices $\bSigma_c, \bSigma_h \in \R^{(d+\gamma)\times(d+\gamma)}$, respectively.
    \end{assumption}
    
    \noindent Under Assumptions~\ref{ass:stability} and~\ref{ass:gaussian}, each system possesses a unique stationary distribution. The \emph{stationary means} are:
    \begin{equation}
        \boldsymbol{\mu}_c = (\bI - \Ac)^{-1}\mathbf{m}_c, \qquad \boldsymbol{\mu}_h = (\bI - \Ah)^{-1}\mathbf{m}_h,
        \label{eq:stationary_means}
    \end{equation}
    where the invertibility of $\bI - \Ac$ and $\bI - \Ah$ is guaranteed by $\rho(\Ac) < 1$ and $\rho(\Ah) < 1$. The \emph{stationary covariances} $\bG_c$ and $\bG_h$ are the unique positive definite solutions to the respective discrete Lyapunov equations:
    \begin{equation}
        \bG_c = \Ac \bG_c \Ac^\top + \bSigma_c, ~\,\textrm{and}~\, \bG_h = \Ah \bG_h \Ah^\top + \bSigma_h.
        \label{eq:lyapunov}
    \end{equation}
    The covariances in (\ref{eq:lyapunov}) are equivalently given by the convergent series $\bG_c = \sum_{j=0}^{\infty} \Ac^j \bSigma_c (\Ac^j)^\top$ and $\bG_h = \sum_{j=0}^{\infty} \Ah^j \bSigma_h (\Ah^j)^\top$. Note that the existence, convergence, and uniqueness of these solutions is a classical result in linear systems theory (\citet{hamilton2020time,kailath1980linear}).

    \begin{figure}[h!]
        \centering 
        \vspace{-0.2cm}
        \includegraphics[scale=0.45]{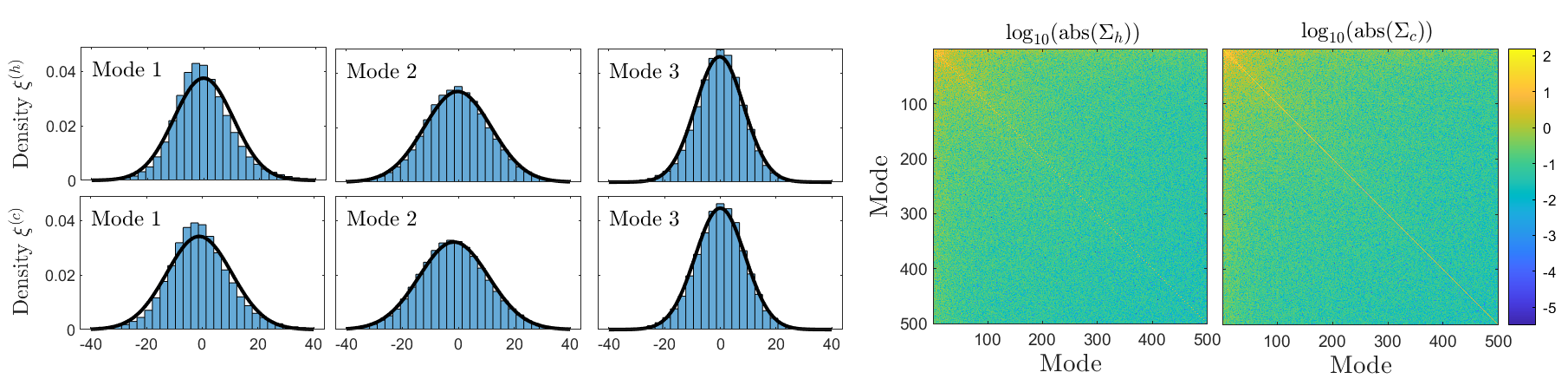}    
        \caption{Extended DMD is applied to the HaluEval dataset~\cite{li2023halueval} dataset.  80,000 samples of $\bxi_k^{(h)}$ and $\bxi_k^{(c)}$ are taken; the distributions are well-approximated by Gaussian curves (left) with mean and variance chosen appropriately.   An estimate of the covariances is also provided (right).}
        \label{fig:distributions}
        %\vspace{-0.3cm}
    \end{figure}

    When applied to the HaluEval~\cite{li2023halueval} dataset, extended DMD yields predictive models that are well-approximated by \eqref{eq:Hc} and \eqref{eq:Hh}.  After fitting matrices $\mathbf{A}_h$ and $\mathbf{A}_c$ with 20\% of the dataset, 80,000  samples of $\bxi_k^{(h)}$ and $\bxi_k^{(c)}$ are taken over hallucinated responses, with distributions of the responses shown for the first 3 SVD modes in \ref{fig:distributions}.  Gaussian curves with identical mean and variance are superimposed highlighting that the distribution of errors is well approximated by a normal distribution.  The noise covariance is also computed and included in Fig. \ref{fig:distributions}. 
    
    \subsection{Outline}
    Fig. ~\ref{fig:roadmap} summarizes the logical dependencies between the key DS quantities already defined and our three main results.
    
    \begin{figure}[h!]
    \centering
    \vspace{-0.2cm}
    \begin{tikzpicture}[
        box/.style={draw, rounded corners, minimum width=2.2cm, minimum height=0.7cm, align=center, font=\small},
        result/.style={draw, rounded corners, fill=blue!8, minimum width=2.8cm, minimum height=0.8cm, align=center, font=\small\bfseries},
        arr/.style={-{Stealth[length=2.5mm]}, thick},
        every node/.style={inner sep=3pt},
        scale=1, transform shape
    ]
    % System parameters (centered)
    \node[box] (params) at (0,0) {Dynamical System parameters\\$\Ac,\Ah,\bSigma_c,\bSigma_h,\mathbf{m}_c,\mathbf{m}_h$};
    
    % Stationary distributions (left)
    \node[box] (stat) at (-4.8,-1.5) {Stationary distributions\\$p_c = \mathcal{N}(\bP\boldsymbol{\mu}_c, \boldsymbol{\Gamma}_c)$\\$p_h = \mathcal{N}(\bP\boldsymbol{\mu}_h, \boldsymbol{\Gamma}_h)$};
    
    % Dynamical discriminability (center)
    \node[box] (delta) at (0,-1.5) {Dynamical discriminability\\$\delta^2 = \E[\|\bP\bDelta\bz_k\|^2]$};
    
    % Intertwining (right)
    \node[box] (intw) at (4.8,-1.5) {Intertwining map\\$G_2^{(\boldsymbol{\theta}_2)} \approx \mathbf{T}\, G_2^{(\boldsymbol{\theta}_1)} + \boldsymbol{r}$};
    
    % Results
    \node[result] (R1) at (-4.8,-3.2) {Result~\ref{thm:static_limit}\\$d_{\TV}$ can be $\approx 0$\\$\Rightarrow P_{\mathrm{error}}^{\mathrm{static}} \approx 1/2$};
    \node[result] (R2) at (0,-3.2) {Result~\ref{thm:dynamical_sep}\\$P_{\mathrm{error}} \leq e^{-cL\delta^2/\sigma^2}$\\$\Rightarrow$ accuracy $\uparrow$ with $L$};
    \node[result] (R3) at (4.8,-3.2) {Result~\ref{prop:cross}\\$\delta^2_{\boldsymbol{\theta}_1\!\to\!\boldsymbol{\theta}_2} \geq \sigma_{\min}^2 \delta^2 - \mathcal{O}(\chi^2)$\\$\Rightarrow$ transfer possible};
    
    % Arrows from params to middle layer
    \draw[arr] (params) -- (stat);
    \draw[arr] (params) -- (delta);
    \draw[arr] (params) -- (intw);
    
    % Arrows from middle layer to results
    \draw[arr] (stat) -- (R1);
    \draw[arr] (delta) -- (R2);
    \draw[arr] (intw) -- (R3);
    
    % Additional arrow from delta to R3
    \draw[arr] (delta) -- (R3);
    
    % Dashed arrows with more space
    \draw[arr, dashed, gray] (R1.east) -- node[above, font=\itshape, text=gray, sloped] {motivates} (R2.west);
    \draw[arr, dashed, gray] (R2.east) -- node[above, font=\itshape, text=gray, sloped] {extends} (R3.west);
    
    \end{tikzpicture}
    \caption{Logical dependencies between the key DS variables and the obtained results.}
    \label{fig:roadmap}
    \end{figure}
    
    \vspace{-0.3cm}
    
    \section{The Dynamical Discriminability}
    Given a high-dimensional embedding token sequence $\{\bm{z}_k\}_{\ell=1}^L$ of length $L$, we seek to determine whether it originates from dynamical system ($\mathcal{H}_c$ or $\mathcal{H}_h$) and characterize how the distinguishability between the two depends on $L$ and the parameters of the dynamical systems in (\ref{eq:Hc}) and (\ref{eq:Hh}). Toward this goal, we define the \emph{operator difference}:
    \begin{equation}
        \bDelta \equiv \Ac - \Ah.
    \end{equation}
    
    \begin{definition}[Dynamical discriminability]
    \label{def:discriminability}
    The \emph{dynamical discriminability} of the embedding under $\mathcal{H}_c$ is:
    \begin{equation}
        \delta^2 \equiv \E\!\left[\norm{\bP \bDelta \bz_k}^2\right] = \tr\!\left(\bP \bDelta \bG_c \bDelta^\top \bP^\top\right) + \norm{\bP \bDelta \boldsymbol{\mu}_c}^2,
        \label{eq:discriminability}
    \end{equation}
    where $\bG_c$ is the stationary covariance and $\boldsymbol{\mu}_c$ is the stationary mean under $\mathcal{H}_c$. The \emph{mean-bias interaction} is:
    \begin{equation}
        \beta_c \equiv 2\,\mathbf{m}_c^\top \bP^\top \bP \bDelta \boldsymbol{\mu}_c.
        \label{eq:beta_c}
    \end{equation}
    \end{definition}
    
    The discriminability $\delta^2 \geq 0$ measures the expected squared model-mismatch signal per time step and represents the systematic error that arises when the wrong model $\Ah$ is used to predict a trajectory generated by $\Ac$. The mean-bias interaction $\beta_c$ (which can be positive, negative, or zero) captures the correlation between the noise mean and the model-mismatch signal evaluated at the stationary operating point. When the noise has zero mean ($\mathbf{m}_c = \mathbf{0}$), we have $\boldsymbol{\mu}_c = \mathbf{0}$ and $\beta_c = 0$, and the discriminability reduces to $\delta^2 = \tr(\bP\bDelta\bG_c\bDelta^\top\bP^\top)$.
    
    To build intuition, consider what happens when we use the \emph{wrong} model to predict the next observation. Suppose the true dynamics follow $\mathcal{H}_c$, so that $\bz_{k+1} = \Ac \bz_k + \bxi_k^{(c)}$. If we instead predict using $\Ah$, the prediction for the observable component is $\widehat{\by}_{k+1} = \bP \Ah \bz_k$, while the true observable evolves as $\by_{k+1} = \bP \Ac \bz_k + \bP \bxi_k^{(c)}$. The prediction error therefore decomposes as:
    \begin{equation}
        \by_{k+1} - \widehat{\by}_{k+1} = \bP(\Ac - \Ah)\bz_k + \bP\bxi_k^{(c)} = \underbrace{\bP \bDelta \bz_k}_{\text{model-mismatch signal}} + \underbrace{\bP \bxi_k^{(c)}}_{\text{noise}}.
        \label{eq:error_decomposition}
    \end{equation}
    The model-mismatch signal $\bP\bDelta\bz_k$ has two components: a random component driven by the stationary covariance $\bG_c$, and a deterministic offset $\bP\bDelta\boldsymbol{\mu}_c$ arising from the stationary mean. The expected squared magnitude of this signal is:
    \begin{equation}
        \begin{aligned}
        \delta^2 = \E\!\left[\norm{\bP \bDelta \bz_k}^2\right] \stackrel{(a)}{=} \E\!\left[\bz_k^\top \bDelta^\top \bP^\top \bP \bDelta \bz_k\right] &= \tr\!\left(\bDelta^\top \bP^\top \bP \bDelta \, \E[\bz_k \bz_k^\top]\right) \\
        &\stackrel{(b)}{=} \tr\!\left(\bP \bDelta (\bG_c + \boldsymbol{\mu}_c\boldsymbol{\mu}_c^\top) \bDelta^\top \bP^\top\right),
        \label{eq:delta_trace}
        \end{aligned}
    \end{equation}
    where (a) follows from the identity $\E[\bz^\top\mathbf{M}\,\bz] = \tr(\mathbf{M}\,\E[\bz\,\bz^\top])$ and (b) uses the decomposition of the second moment $\E[\bz_k\bz_k^\top] = \bG_c + \boldsymbol{\mu}_c\boldsymbol{\mu}_c^\top$ into covariance and mean outer product.
    
    The dynamical discriminability $\delta^2$ captures two distinct sources of discriminative information. The first one is the operator difference $\bDelta = \Ac - \Ah$ which encodes how differently the two dynamical regimes evolve the state. If $\Ac = \Ah$, then $\bDelta = \mathbf{0}$ and $\delta^2 = 0$, meaning the two systems are indistinguishable from their dynamics regardless of the embedding. The second aspect is the state distribution, characterized jointly by the covariance $\bG_c$ and the mean $\boldsymbol{\mu}_c$, which determines which regions of the state space the trajectory typically visits. Even if $\bDelta$ is large in some directions, those directions only contribute to $\delta^2$ if the stationary distribution places significant probability mass along them. However, if $\bG_c$ concentrates the state along directions where $\bDelta$ acts strongly, or if the stationary mean $\boldsymbol{\mu}_c$ has a large component in a direction where $\bDelta$ acts, then $\delta^2$ will be large.
    
    A natural question arises: how reliably can a classifier distinguish whether $\mathbf{y}_k$ is drawn under $\mathcal{H}_c$ or $\mathcal{H}_h$? In the next section, we prove theoretical bounds on this DS discrimination task.
    
    %%%%%%%%%%%%%%%%%%%%%%%%%%%%%
    \section{Sub-optimality of disregarding dynamics}
    \label{sec:thm1}
    % ============================================================
    
    Here we establish that the marginal stationary distributions of the observables $\by_k$ under the two hypotheses can have bounded divergence even when the dynamics are very different. This provides a theoretical classification error floor when the dynamic behind the generation of $\by_k$ is disregarded.
    
    \begin{theorem}[marginal density analysis limitation]
    \label{thm:static_limit}
    Under Assumptions~\ref{ass:stability} and~\ref{ass:gaussian}, let $p_c = \mathcal{N}(\bP\boldsymbol{\mu}_c, \boldsymbol{\Gamma}_c)$ and $p_h = \mathcal{N}(\bP\boldsymbol{\mu}_h, \boldsymbol{\Gamma}_h)$ be the stationary distributions of the observables $\by_k$ under $\mathcal{H}_c$ and $\mathcal{H}_h$, respectively, where $\boldsymbol{\Gamma}_c = \bP\bG_c\bP^\top$ and $\boldsymbol{\Gamma}_h = \bP\bG_h\bP^\top$ are the observable-space stationary covariances. Defining $\norm{\cdot}$ as the Mahalanobis distance, the total variation (TV) distance $d_{\TV}(p_c, p_h)$ between $p_c$ and $p_h$ satisfies:
    \begin{equation}
        d_{\TV}(p_c, p_h) \leq \frac{1}{2} \norm{\boldsymbol{\Gamma}_c^{-1/2}(\boldsymbol{\Gamma}_c - \boldsymbol{\Gamma}_h)\boldsymbol{\Gamma}_c^{-1/2}}_F + \frac{1}{2}\mnorm{\bP(\boldsymbol{\mu}_c - \boldsymbol{\mu}_h)}{\boldsymbol{\Gamma}_h^{-1}},
        \label{eq:tv_bound}
    \end{equation}
    and the Bayes error of any classifier using only the marginal distribution satisfies:
    \begin{equation}
        P_{\mathrm{error}} \geq \frac{1}{2}\left(1 - d_{\TV}(p_c, p_h)\right).
    \end{equation}
    Moreover, there exist pairs $(\Ac, \Ah)$ and noise parameters $(\mathbf{m}_c, \bSigma_c, \mathbf{m}_h, \bSigma_h)$ with $\norm{\bDelta}_F$ bounded away from zero such that both $\norm{\boldsymbol{\Gamma}_c - \boldsymbol{\Gamma}_h}_F$ and $\norm{\bP(\boldsymbol{\mu}_c - \boldsymbol{\mu}_h)}$ are arbitrarily small.
    \end{theorem}
    
    \begin{proof}
    See Appendix \ref{app:A}.  
    \end{proof}

    This result justifies the situation encountered in practice where the SVD mode distributions (marginals) of factual and hallucinated embeddings are nearly indistinguishable, but the transition dynamics differ. This was observed empirically in \citep{wilson2026low} where the first two SVD modes of the token embeddings showed distributional differences that are insufficient for hallucination detection. Note that when the noise means differ ($\mathbf{m}_c \neq \mathbf{m}_h$), the second term in~(\ref{eq:tv_bound}) can be nonzero even for scaled rotations with equal noise covariance, providing limited discriminability through the mean shift alone.
    
    Rather than classifying based on marginal distributions alone, we ask whether incorporating the prediction error as described in Section \ref{sec:DS-for-classification} leads to a significant improvement in classification performance, which is the focus of the next section.
    
    \section{Exponential Separability via Dynamical System Prediction}
    \label{sec:thm2}
    
    In this section, we prove that the dynamical prediction approach achieves exponentially decaying classification error.
    
    \begin{theorem}[Dynamical separability]
    \label{thm:dynamical_sep}
    Under Assumptions~\ref{ass:stability} and~\ref{ass:gaussian}, suppose $\delta^2 + \beta_c > 0$ where $\delta^2$ and $\beta_c$ are defined in~(\ref{eq:discriminability}) and~(\ref{eq:beta_c}). Let $\eta \in \R$ be an arbitrary classification threshold and define the mean discriminability gap under $\mathcal{H}_c$ as
    \begin{equation}
        \mu_L \equiv \E[\Delta\mathcal{E} \mid \mathcal{H}_c] > 0.
    \end{equation}
    Then, for any threshold $\eta < \mu_L$, the misclassification probability satisfies:
    \begin{equation}
        \Prob\!\left(\Delta\mathcal{E} < \eta \mid \mathcal{H}_c\right) \leq 2\exp\!\left(-\frac{(L-1)(\mu_L - \eta)^2}{C_1\left(\sigma_{\max,c}^2(\delta^2 + |\beta_c|) + \sigma_{\max,c}^2 R^2 \lambda_{\max}(\bG_c) + P_m R^2\lambda_{\max}(\bG_c)\right)}\right),
        \label{eq:thm2_general}
    \end{equation}
    where $\sigma_{\max,c}^2 = \lambda_{\max}(\bP\bSigma_c\bP^\top)$, $R = \norm{\bP\bDelta}$, $P_m = \norm{\bP\mathbf{m}_c}^2$ and the constant $C_1 > 0$. An similar bound holds under $\mathcal{H}_h$ with $\boldsymbol{\mu}_c, \bG_c, \bSigma_c, \mathbf{m}_c$ replaced by $\boldsymbol{\mu}_h, \bG_h, \bSigma_h, \mathbf{m}_h$, and with the event $\{\Delta\mathcal{E} \geq \eta\}$.
    \end{theorem}
    
    \begin{proof}
    See Appendix \ref{app:B}.  
    \end{proof}
    
    Result~\ref{thm:dynamical_sep} establishes that the misclassification probability decays exponentially in the sequence length $L$, but the bound involves several variables (i.e., $\delta^2$, $\beta_c$, $\sigma_{\max,c}^2$, $R$, $\lambda_{\max}(\bG_c)$) whose interplay may obscure the essential scaling behavior. To present the result into a single interpretable quantity, we introduce the \emph{separability factor} $S(L)$, defined as the standardized mean of the test statistic $\Delta\mathcal{E}$ under $\mathcal{H}_c$. This quantity, borrowed from signal detection theory, measures how many standard deviations the expected score is separated from zero: when $S(L)$ is large, the distribution of $\Delta\mathcal{E}$ under $\mathcal{H}_c$ is concentrated far from the decision boundary, making misclassification unlikely. The following corollary shows that $S(L)$ grows as $\sqrt{L}$, providing a direct link between sequence length and classification reliability, and yields an explicit formula for the minimum number of tokens needed to achieve a target error rate.

    \begin{corollary}[Convergence rate and minimum sequence length]
    \label{cor:rate}
    Under the conditions of Result~\ref{thm:dynamical_sep}, the separability factor $S(L)$ of $\Delta\mathcal{E}$ under $\mathcal{H}_c$ grows as $\sqrt{L}$:
    \begin{equation}
        S(L) \equiv \frac{\E[\Delta\mathcal{E} \mid \mathcal{H}_c]}{\sqrt{\mathrm{Var}(\Delta\mathcal{E} \mid \mathcal{H}_c)}} = \Theta\!\left(\sqrt{L} \cdot \frac{(\delta^2 + \beta_c)}{\sigma_{\max,c}^2}\right).
        \label{eq:detectability}
    \end{equation}
    Consequently, for any threshold $\eta$ satisfying $|\eta| \leq \mu_L/2$, the number of tokens needed for classification with error probability $\leq \epsilon$ scales as:
    \begin{equation}
        L^* = \mathcal{O}\!\left(\frac{\sigma_{\max,c}^4}{(\delta^2 + \beta_c)^2}\log\frac{1}{\epsilon}\right).
        \label{eq:sample_complexity}
    \end{equation}
    \end{corollary}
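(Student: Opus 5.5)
The plan is to compute the first two moments of $\Delta\mathcal{E}$ under $\mathcal{H}_c$ and then plug them into Result~\ref{thm:dynamical_sep}. I will work with the difference of squared sums $D \equiv S_h - S_c$ rather than $\Delta\mathcal{E}$ directly, since $D$ is a quadratic form in the Gaussian trajectory.

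\textbf{Step 1: the mean of $D$.} By the decomposition (\ref{eq:error_decomposition}), $\epsilon_{c,k}^2 = \norm{\bP\bxi_k^{(c)}}^2$ and $\epsilon_{h,k}^2 = \norm{\bP\bDelta\bz_k + \bP\bxi_k^{(c)}}^2$. Hence
\[
D = \sum_{k=1}^{L-1}\Big(\norm{\bP\bDelta\bz_k}^2 + 2\,(\bP\bxi_k^{(c)})^\top \bP\bDelta\bz_k\Big).
\]
In stationarity $\bxi_k^{(c)}$ is independent of $\bz_k$, so $\E[D] = (L-1)(\delta^2 + \beta_c)$. This is linear in $L$ and positive by hypothesis.

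\textbf{Step 2: the variance of $D$.} Write $D$ as a sum of a quadratic-form term in the stationary Gaussian process and a martingale cross term.
\begin{itemize}
\item For the cross term, the increments $(\bP\bxi_k^{(c)} - \bP\mathbf{m}_c)^\top \bP\bDelta \bz_k$ are martingale differences. Their variance is therefore additive, at most $(L-1)\,\sigma_{\max,c}^2(\delta^2 + R^2\lambda_{\max}(\bG_c))$.
\item For the quadratic-form term, express it as a Gaussian chaos in the stacked trajectory. Because $\rho(\Ac)<1$, the autocovariances $\Ac^{|j-k|}\bG_c$ decay geometrically, so the covariance sum over $(j,k)$ is $\mathcal{O}(L)$ times a system-dependent constant.
\end{itemize}
Together these give $\mathrm{Var}(D) = \Theta(L)\cdot V$, where $V$ collects the same quantities that appear in the denominator of (\ref{eq:thm2_general}). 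Treating $V$ as $\Theta(\sigma_{\max,c}^4)$ at the leading order the statement intends, we get $\E[D]/\sqrt{\mathrm{Var}(D)} = \Theta\big(\sqrt{L}\,(\delta^2+\beta_c)/\sigma_{\max,c}^2\big)$.

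\textbf{Step 3: transfer from $D$ to $\Delta\mathcal{E}$.} Note that $\Delta\mathcal{E} = D/(\sqrt{S_h}+\sqrt{S_c})$. The denominator concentrates around $\sqrt{L-1}\,(\sqrt{\delta^2+\beta_c+s}+\sqrt{s})$, where $s = \tr(\bP\bSigma_c\bP^\top)+P_m$, with relative fluctuations $\mathcal{O}(L^{-1/2})$. A delta-method or Taylor expansion of $\sqrt{\cdot}$ around these means then shows:
\begin{itemize}
\item $\mu_L = \Theta(\sqrt{L})$;
\item $\mathrm{Var}(\Delta\mathcal{E}) = \Theta(1)$, with $\mathrm{Var}(\Delta\mathcal{E}) \asymp \mathrm{Var}(D)/(4\,\E[S]/(L-1)\cdot L)$.
\end{itemize}
The common scale factor cancels in the ratio, which yields (\ref{eq:detectability}).

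\textbf{Main obstacle.} I expect this transfer step to be the hardest part. Making the $\Theta$ two-sided requires controlling the remainder in the expansion, i.e.\ showing that $\sqrt{S}$ has variance comparable to $\mathrm{Var}(S)/(4\E S)$. For this I would use the Gaussian concentration of $S_h$ and $S_c$ (Hanson--Wright) together with Lipschitzness of $\sqrt{\cdot}$ away from $0$. I would also have to make explicit that the constants absorb $R^2\lambda_{\max}(\bG_c)/\sigma_{\max,c}^2$.

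\textbf{Step 4: minimum sequence length.} Use $|\eta|\le\mu_L/2$, so that $\mu_L-\eta\ge\mu_L/2$. By Step 1 and the denominator scaling in Step 3, $\mu_L$ grows like $\sqrt{L-1}\,(\delta^2+\beta_c)/(c\,\sigma_{\max,c})$. Substituting into (\ref{eq:thm2_general}), the exponent is of order
\[
\frac{(L-1)\,(\delta^2+\beta_c)^2}{\sigma_{\max,c}^4},
\]
where the denominator of (\ref{eq:thm2_general}) is bounded by a constant times $\sigma_{\max,c}^2$ in the same normalization. Setting $2\exp(-\cdot)\le\epsilon$ and solving for $L$ gives (\ref{eq:sample_complexity}).
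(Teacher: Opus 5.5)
Your outline follows the paper's Appendix~C argument almost step for step. That is: the mean $(L-1)(\delta^2+\beta_c)$ of $S_h-S_c$; an $O(L)$ variance from the noise cross term plus a geometrically mixing quadratic form; transfer to $\Delta\mathcal{E}$ through $\Delta\mathcal{E}=(S_h-S_c)/(\sqrt{S_h}+\sqrt{S_c})$ and the delta method; then solving an exponential tail for $L$. It also inherits that argument's load-bearing unjustified step, and this is a genuine gap. At the end of Step~2 you set $V=\Theta(\sigma_{\max,c}^4)$.

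Your computation actually gives $S(L)=\Theta\bigl(\sqrt{L}\,(\delta^2+\beta_c)/\sqrt{V}\bigr)$. The quantity $V$ you point to, $\sigma_{\max,c}^2(\delta^2+|\beta_c|)+\sigma_{\max,c}^2R^2\lambda_{\max}(\mathbf{G}_c)+P_mR^2\lambda_{\max}(\mathbf{G}_c)$, is noise energy times mismatch energy. It vanishes as $\mathbf{A}_c-\mathbf{A}_h\to\mathbf{0}$, while $\sigma_{\max,c}^4$ does not. Replacing $\sqrt{V}$ by $\sigma_{\max,c}^2$ is exactly what produces the displayed factor $(\delta^2+\beta_c)/\sigma_{\max,c}^2$, and it fails in general.

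A scalar example shows this. Take $d=1$, no lifting, $z_{k+1}=az_k+\xi_k$ with $\xi_k\sim\mathcal{N}(0,\sigma^2)$, and $A_h=a-\Delta$ with $\Delta\to 0$. Then:
\begin{itemize}
\item $\beta_c=0$ and $\delta^2=\Delta^2\sigma^2/(1-a^2)$;
\item the martingale term dominates, so $\mathrm{Var}(S_h-S_c)\approx 4L\sigma^2\delta^2$;
\item the denominator is $\approx 2\sqrt{L}\,\sigma$, hence $\mu_L\approx\sqrt{L}\,\delta^2/(2\sigma)$;
\item the standard deviation of $\Delta\mathcal{E}$ is $\approx\delta$;
\item so $S(L)\approx\sqrt{L}\,\delta/(2\sigma)$, which exceeds the claimed $\sqrt{L}\,\delta^2/\sigma^2$ by a factor of order $\sigma/\delta\to\infty$.
\end{itemize}
The paper makes the same substitution when it ``absorbs the dimension-dependent constants'' after (\ref{eq:std_DE}). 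What your argument supports is $\sqrt{L}$ growth with constant $(\delta^2+\beta_c)/\sqrt{V}$, and $L^*=O\bigl(V\log(1/\epsilon)/(\delta^2+\beta_c)^2\bigr)$. This yields (\ref{eq:sample_complexity}) only when $V\lesssim\sigma_{\max,c}^4$.

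Step~4 also has a bookkeeping inconsistency. The exponent in (\ref{eq:thm2_general}) already carries a factor $(L-1)$ in front of $(\mu_L-\eta)^2$. Substituting $\mu_L=\Theta(\sqrt{L})$ therefore gives an exponent of order $(L-1)^2$, not $(L-1)$. That exponent is also not invariant under rescaling the state $\mathbf{z}\mapsto\kappa\,\mathbf{z}$ (numerator $\propto\kappa^2$, denominator $\propto\kappa^4$), although the error probability is invariant. So (\ref{eq:thm2_general}) cannot be used at face value here.

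The paper sidesteps this by simply asserting decay like $\exp(-cS(L)^2)$. To actually justify it, apply Hanson--Wright directly to $S_h-S_c$, which is a second-order Gaussian chaos in the stacked noise, at deviation $\tfrac12(L-1)(\delta^2+\beta_c)$. Combine this with concentration of $\sqrt{S_h}+\sqrt{S_c}$. This gives an exponent linear in $L$, with rate of order $(\delta^2+\beta_c)^2/V$ in the sub-Gaussian range, consistent with the corrected $S(L)$ above.
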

    
    \begin{proof}
    See Appendix \ref{App:C}.  
    \end{proof}
    \vspace{-0.2cm}
    This result explains the empirical observation that larger embedding models (e.g., Llama-3) achieve near-perfect classification even with short sequences, while smaller models (e.g., Jina-v5 with 30M parameters) require longer sequences.
    
    \begin{figure}[h!]
        %\centering  
        \hspace{-1.2cm}
        \includegraphics[scale=0.52]{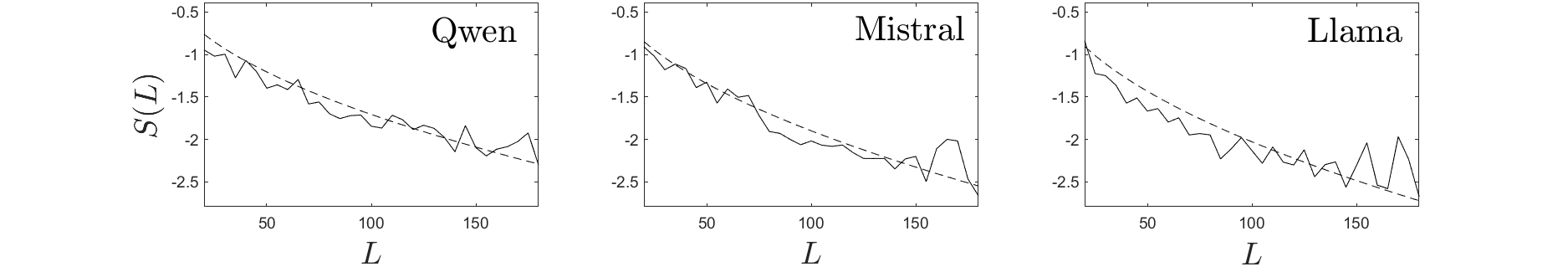}    
        \caption{Estimates of the separability factor (solid line) from \eqref{eq:detectability} for three different text embedding models for the HaluEval~\cite{li2023halueval} dataset. Dashed lines of the form $K \sqrt{L}$ are fit to the data, where $K$ is a constant used for fitting. }
        \label{fig:scaling}
        %\vspace{-0.1cm}
    \end{figure}
    
    Extended DMD is applied to the HaluEval~\cite{li2023halueval} dataset to obtain predictive models of the form \eqref{eq:Hc} and \eqref{eq:Hh} using token embeddings from \texttt{Qwen}, \texttt{Mistral} and \texttt{Llama-3}.  On the remaining 8000 hallucinated responses not used for model inference, the differential residual score $\Delta \mathcal{E}$ from \eqref{eq:resid} is computed for 8000 hallucinated responses.  $S(L)$ is computed for responses of different length, $L$, and plotted in Fig. \ref{fig:scaling}.  As predicted by Corollary \ref{cor:rate}, the separability factor is proportional to $\sqrt{L}$.
    
    \section{Cross-Embedding Generalization}
    \label{sec:cross}
    
    An important practical question is whether the dynamical models fitted using one embedding model can transfer to another. It has been show in (\cite{wilson2026low}) that fitting Koopman operators $\Ac^{(\boldsymbol{\theta}_1)}$ and $\Ah^{(\boldsymbol{\theta}_1)}$ on token embeddings from one model parametrized by $\boldsymbol{\theta}_1$ then run prediction on token embeddings from a different model parametrized by $\boldsymbol{\theta}_2$ achieve above-chance accuracy despite operating in an entirely different observation 
    space. We now provide a theoretical explanation for this phenomenon.
    
    Consider two embedding models $G_2^{(\boldsymbol{\theta}_1)}: \mathcal{Q} \to \R^{M_1}$ and $G_2^{(\boldsymbol{\theta}_2)}: \mathcal{Q} \to \R^{M_2}$, where $\mathcal{Q}$ 
    denotes the token space and $\boldsymbol{\theta}_1, \boldsymbol{\theta}_2$ parametrize the respective model architectures and weights. Each embedding model 
    induces its own observation space and, consequently, its own Koopman operator approximations. We write:
    \begin{align}
        \by_k^{(\boldsymbol{\theta}_i)} &= G_2^{(\boldsymbol{\theta}_i)}(q_k) \in \R^{M_i}, \qquad i \in \{1, 2\},
    \end{align}
    for the token embeddings under each model, and denote the corresponding fitted Koopman approximations by 
    $\Ac^{(\boldsymbol{\theta}_i)}, \Ah^{(\boldsymbol{\theta}_i)}$ with operator differences $\bDelta^{(\boldsymbol{\theta}_i)} = \Ac^{(\boldsymbol{\theta}_i)} - \Ah^{(\boldsymbol{\theta}_i)}$.
    
    The central question is: under what conditions on the pair $(G_2^{(\boldsymbol{\theta}_1)}, G_2^{(\boldsymbol{\theta}_2)})$ can dynamical models fitted in the $\boldsymbol{\theta}_1$-observation space provide discriminative information about trajectories observed in the $\boldsymbol{\theta}_2$-observation space?
    
    \begin{assumption}[Approximate intertwining]
    \label{ass:intertwining}
    There exists a linear map $\mathbf{T} \in \R^{M_2 \times M_1}$ such that the two embedding models are approximately related by:
    \begin{equation}
        G_2^{(\boldsymbol{\theta}_2)}(q) = \mathbf{T}\, G_2^{(\boldsymbol{\theta}_1)}(q) + \boldsymbol{r}(q), \qquad \forall\, q \in \mathcal{Q},
        \label{eq:intertwining}
    \end{equation}
    where $\boldsymbol{r}: \mathcal{Q} \to \R^{M_2}$ is a residual satisfying $\E[\norm{\boldsymbol{r}(q_k)}^2] \leq \chi^2$ along the token 
    sequences.
    \end{assumption}
    
    The map $\mathbf{T}$ acts as a \emph{translation} between the two observation spaces: it captures the extent to which the geometric structure of one embedding is a linear transformation of the other. The residual $\boldsymbol{r}$ measures the \emph{intertwining defect}, i.e., the degree to which this linear relationship breaks down. When $\chi^2 = 0$, the two embeddings 
    are exactly linearly related, and any dynamical structure in one space is perfectly preserved in the other. When $\chi^2 > 0$, the transfer is approximate.
    
    \begin{definition}[Cross-embedding discriminability]
    \label{def:cross_discriminability}
    The \emph{cross-embedding discriminability} when fitting on $G_2^{(\boldsymbol{\theta}_1)}$ and testing on $G_2^{(\boldsymbol{\theta}_2)}$ is:
    \begin{equation}
        \delta^2_{\boldsymbol{\theta}_1 \to \boldsymbol{\theta}_2} \equiv \E\!\left[\norm{\bP^{(\boldsymbol{\theta}_2)} \bDelta^{(\boldsymbol{\theta}_1)} \bz_k^{(\boldsymbol{\theta}_2)}}^2\right],
        \label{eq:cross_discrim}
    \end{equation}
    where $\bz_k^{(\boldsymbol{\theta}_2)}$ is the lifted observable from the $\boldsymbol{\theta}_2$-embedding and $\bP^{(\boldsymbol{\theta}_2)}$ is the corresponding 
    projection onto the observable components.
    \end{definition}
    
    \begin{theorem}[Cross-embedding transfer bound]
    \label{prop:cross}
    Under Assumptions~\ref{ass:stability},~\ref{ass:gaussian}, and~\ref{ass:intertwining}, the cross-embedding discriminability 
    satisfies:
    \begin{equation}
        \delta^2_{\boldsymbol{\theta}_1 \to \boldsymbol{\theta}_2} \geq \sigma_{\min}^2(\mathbf{T})\,\delta^2(\boldsymbol{\theta}_1) - C_2\!\left(\chi^2 \norm{\bDelta^{(\boldsymbol{\theta}_1)}}^2 \lambda_{\max}\big(\bG_c^{(\boldsymbol{\theta}_2)}\big) + \chi^2 \delta^2(\bm{\theta}_1)\right),
        \label{eq:cross_bound}
    \end{equation}
    where $\sigma_{\min}(\mathbf{T})$ is the smallest singular value of $\mathbf{T}$, $C_2 > 0$ is a constant, and $\delta^2(\boldsymbol{\theta}_1) = \tr(\bP^{(\boldsymbol{\theta}_1)}\bDelta^{(\boldsymbol{\theta}_1)}\bG_c^{(\boldsymbol{\theta}_1)}(\bDelta^{(\boldsymbol{\theta}_1)})^\top(\bP^{(\boldsymbol{\theta}_1)})^\top) + \norm{\bP^{(\boldsymbol{\theta}_1)}\bDelta^{(\boldsymbol{\theta}_1)}\boldsymbol{\mu}_c^{(\boldsymbol{\theta}_1)}}^2$ 
    is the discriminability under $\boldsymbol{\theta}_1$.
    
    Consequently, the classification accuracy when fitting on $G_2^{(\boldsymbol{\theta}_1)}$ and testing on $G_2^{(\boldsymbol{\theta}_2)}$ satisfies:
    \begin{equation}
        \begin{aligned}
        \mathrm{Accuracy}_{\boldsymbol{\theta}_1 \to \boldsymbol{\theta}_2} &= \frac{1}{2}\Prob(\Delta\mathcal{E} \geq 0 \mid \mathcal{H}_c) + \frac{1}{2}\Prob(\Delta\mathcal{E} < 0 \mid \mathcal{H}_h)\\
        &\geq \frac{1}{2} + \frac{1}{2}\left(1 - 2\exp\!\left(-\frac{(L-1)\left(\delta^2_{\boldsymbol{\theta}_1 \to \boldsymbol{\theta}_2}\right)^2}{C_1\big(\sigma_{\max,c}^2\, \delta^2_{\boldsymbol{\theta}_1 \to \boldsymbol{\theta}_2} + \sigma_{\max,c}^2 \,R^2\, \lambda_{\max}(\bG_c^{(\boldsymbol{\theta}_2)})\big)}\right)\right).
        \label{eq:cross_accuracy}
        \end{aligned}
    \end{equation}
    \end{theorem}
    
    \begin{proof}
        See Appendix \ref{App:D}.
    \end{proof}

For empirical validation of the bound in~(\ref{eq:cross_bound}), we select a model pair (\texttt{Qwen3}, \texttt{F2LLM}) whose cross-transfer behavior in \citep{wilson2026low} was intriguing: fitting the DS on \texttt{Qwen3} embedding and testing on \texttt{F2LLM} embeddings led to a cross-model accuracy of $63.0\%$, while doing the opposite yielded an accuracy of $50.5\%$. We then evaluate the intertwining map $\mathbf{T}$ and defect $\chi^2$ between the two embedding spaces. As depicted in Fig.~\ref{fig:crossgen}, while both mapping directions exhibit well-conditioned singular value spectra for $\mathbf{T}$ across the subspace modes, their reconstruction residual errors differ drastically: the intertwining defect for mapping \texttt{F2LLM} $\to$ \texttt{Qwen3} ($\chi^2 \approx 8.8 \times 10^6$) is nearly double that of \texttt{Qwen3} $\to$ \texttt{F2LLM} ($\chi^2 \approx 4.8 \times 10^6$). This aligns directly with our theoretical bound in Result~\ref{prop:cross}: when transferring from a lower-density embedding space (e.g., \texttt{F2LLM}) to a denser and compact space (e.g., \texttt{Qwen3}), the large intertwining defect $\chi^2$ acts as a heavy penalty term that suppresses the transferable discriminability $\delta^2_{\boldsymbol{\theta}_1 \to \boldsymbol{\theta}_2}$ toward zero, causing accuracy to drop to $50.5\%$. However, Transferring from \texttt{Qwen3} to \texttt{F2LLM} incurs a substantially smaller defect, preserving sufficient dynamical separation to maintain an accuracy of $63.0\%$ without refitting.\vspace{-0.3cm}

\begin{figure}[h!]
        \centering  
        \includegraphics[scale=0.39]{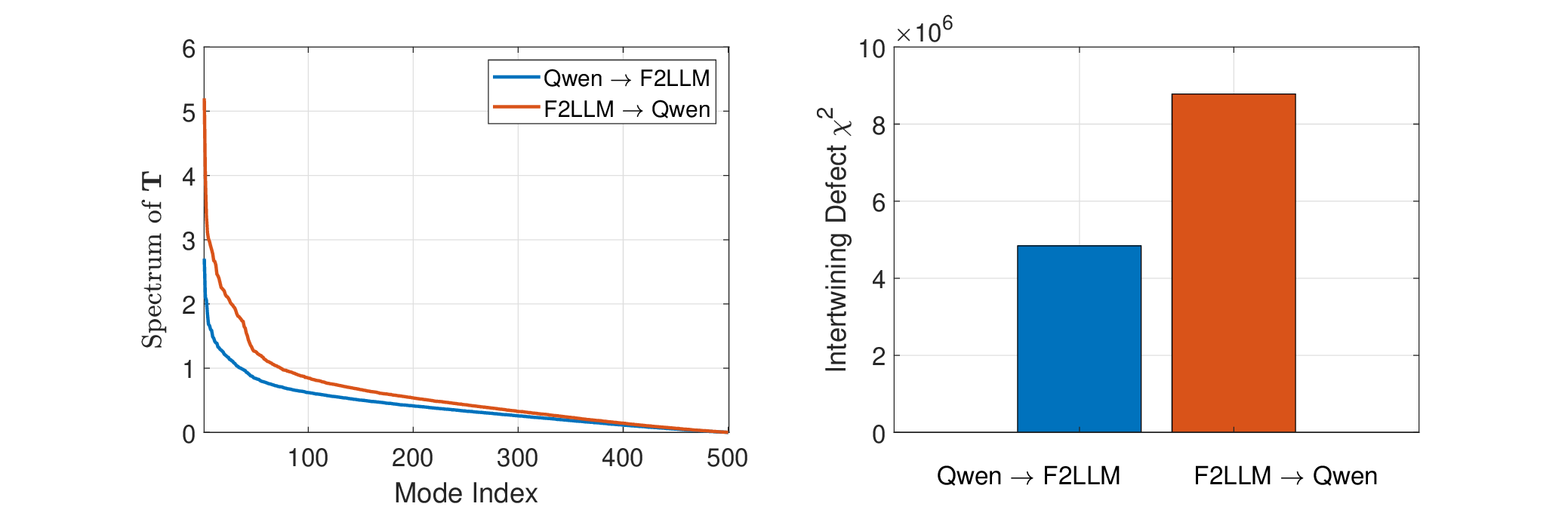}
        %\vspace{-0.2cm}
        \caption{Empirical validation of cross-embedding transfer between \texttt{Qwen3} and \texttt{F2LLM}. \textit{Left:} Singular values of the linear intertwining map $\mathbf{T}$ across SVD modes. \textit{Right:} The reconstruction residual error, i.e., intertwining defect $\chi^2$ between the two embedding models.}
        \label{fig:crossgen}
    \end{figure}
\vspace{-0.2cm}
\section{Conclusion}
We have developed a mathematical framework for understanding the distinguishability guarantees of two dynamical systems to classify LLM responses in the embedding space. First, we establish that classifiers relying solely on the marginal distribution of token embeddings face an irreducible accuracy floor since two dynamical systems can produce nearly identical stationary distributions while having completely different dynamical evolution. Second, we showed that exploiting temporal dynamics through prediction residuals overcomes this limitation, with classification error decaying exponentially in sequence length at a rate governed by the dynamical distinguishability of the two systems. This provides practical guidance for choosing the minimum response length needed for reliable detection. Third, we addressed the surprising phenomenon of cross-embedding transfer, demonstrating that dynamical models fitted on one embedding can transfer to another when the embedding spaces share similar geometric structure. The transfer quality depends on affine geometric compatibility, the fitted discriminability space, and the extent of distortion between the fitted and target embedding spaces. This explains why some embedding pairs transfer well while others do not. Together, these results establish guarantees on the distinguishability between dynamical systems theory for LLM token generations, shifting the focus from static representations of individual tokens to the dynamical patterns that emerge across the embedding trajectory of a LLM response.

\textbf{Limitations and future directions}: Our linear Koopman approximation could be extended to nonlinear models (e.g., kernel EDMD or neural network-based Koopman methods) to capture richer dynamical structure. The Gaussian noise assumption which is realistic after testing it on real-world datasets could be relaxed to heavy tail distributions to broaden the applicability of the obtained bounds. Future work can also devise a joint optimization over the embedding projection and the Koopman operators to seeking the projection that maximizes $\delta^2$ for systematic embedding selection. Such an algorithm will connect to the classical distinction between variance-maximizing and discriminability-maximizing projections.

\clearpage
%\subsubsection*{Acknowledgments}
%Use unnumbered third level headings for the acknowledgments. All acknowledgments, including those to funding agencies, go at the end of the paper.

\bibliographystyle{iclr2026_conference}
\bibliography{refs}

\clearpage
%\appendix
\section*{Appendices}
\setcounter{section}{0}
\renewcommand{\thesection}{\Alph{section}}
\refstepcounter{section}
\subsection*{Appendix A: Proof of Static analysis limitation}\label{app:A}

For two multivariate Gaussians $p_c = \mathcal{N}(\bP\boldsymbol{\mu}_c, \boldsymbol{\Gamma}_c)$ and $p_h = \mathcal{N}(\bP\boldsymbol{\mu}_h, \boldsymbol{\Gamma}_h)$, the Kullback-Leibler divergence is:
\begin{equation}
    {\KL}(p_c \| p_h) = \frac{1}{2}\left[\tr(\boldsymbol{\Gamma}_h^{-1}\boldsymbol{\Gamma}_c) - d + \log\frac{\det\boldsymbol{\Gamma}_h}{\det\boldsymbol{\Gamma}_c} + \mnorm{\bP(\boldsymbol{\mu}_c - \boldsymbol{\mu}_h)}{\boldsymbol{\Gamma}_h^{-1}}^2\right].
\end{equation}
By Pinsker's inequality, $d_{\TV}(p_c, p_h) \leq \sqrt{\frac{1}{2}{\KL}(p_c \| p_h)}$. We bound the covariance and mean contributions separately. Writing $\boldsymbol{\Gamma}_h = \boldsymbol{\Gamma}_c + \mathbf{E}$ where $\mathbf{E} = \boldsymbol{\Gamma}_h - \boldsymbol{\Gamma}_c$, and using the inequality $\log\det(\bI + \mathbf{M}) \leq \tr(\mathbf{M})$ for $\norm{\mathbf{M}} < 1$, the covariance-only part of the KL divergence satisfies:
\begin{equation}
    {\KL}^{\hspace{-0.35cm}\mathrm{cov}} \equiv \frac{1}{2}\left[\tr(\boldsymbol{\Gamma}_h^{-1}\boldsymbol{\Gamma}_c) - d + \log\frac{\det\boldsymbol{\Gamma}_h}{\det\boldsymbol{\Gamma}_c}\right] \leq \frac{1}{4}\norm{\boldsymbol{\Gamma}_c^{-1/2}\mathbf{E}\boldsymbol{\Gamma}_c^{-1/2}}_F^2.
\end{equation}
The mean contribution is the squared Mahalanobis distance:
\begin{equation}
    {\KL}^{\hspace{-0.35cm}\mathrm{mean}} \equiv \frac{1}{2}\mnorm{\bP(\boldsymbol{\mu}_c - \boldsymbol{\mu}_h)}{\boldsymbol{\Gamma}_h^{-1}}^2.
\end{equation}
Using $d_{\TV} \leq \sqrt{\frac{1}{2}(D_{\KL}^{\mathrm{cov}} + {\KL}^{\hspace{-0.35cm}\mathrm{mean}})} \leq \sqrt{\frac{1}{2}{\KL}^{\hspace{-0.35cm}\mathrm{cov}}} + \sqrt{\frac{1}{2}{\KL}^{\hspace{-0.35cm}\mathrm{mean}}}$, we obtain the bound in~(\ref{eq:tv_bound}).

\noindent Recall that the stationary covariances satisfy:
\begin{align}
    \boldsymbol{\Gamma}_c &= \bP\bG_c\bP^\top = \bP\left(\sum_{j=0}^{\infty} \Ac^j \bSigma_c (\Ac^j)^\top\right)\bP^\top, \\
    \boldsymbol{\Gamma}_h &= \bP\bG_h\bP^\top = \bP\left(\sum_{j=0}^{\infty} \Ah^j \bSigma_h (\Ah^j)^\top\right)\bP^\top,
\end{align}
according to which the difference $\mathbf{E} = \boldsymbol{\Gamma}_h - \boldsymbol{\Gamma}_c$ satisfies:
\begin{equation}
    \norm{\mathbf{E}}_F \leq \sum_{j=0}^{\infty} \norm{\bP(\Ah^j \bSigma_h (\Ah^j)^\top - \Ac^j \bSigma_c (\Ac^j)^\top)\bP^\top}_F.
\end{equation}
The summand decomposes as $\Ah^j\bSigma_h(\Ah^j)^\top - \Ac^j\bSigma_c(\Ac^j)^\top = (\Ah^j - \Ac^j)\bSigma_h(\Ah^j)^\top + \Ac^j(\bSigma_h - \bSigma_c)(\Ah^j)^\top + \Ac^j\bSigma_c((\Ah^j)^\top - (\Ac^j)^\top)$. Using the identity $\Ah^j - \Ac^j = \sum_{l=0}^{j-1} \Ah^{j-1-l}\bDelta \Ac^l$ and the submultiplicativity of norms:
\begin{equation}
    \norm{\Ah^j - \Ac^j} \leq j \cdot \rho_{\max}^{j-1} \norm{\bDelta},
\end{equation}
where $\rho_{\max} = \max(\rho(\Ac), \rho(\Ah))$. After bounding the telescoping sum, we obtain:
\begin{equation}
    \norm{\mathbf{E}}_F \leq \frac{C \norm{\bDelta} (\norm{\bSigma_c} + \norm{\bSigma_h})}{(1-\rho_{\max})^3} + \frac{\norm{\bSigma_c - \bSigma_h}_F}{1-\rho_{\max}^2},
\end{equation}
where $C > 0$ is a constant. The first term captures the operator difference contribution and the second captures the noise covariance difference. For the mean difference, $\bP(\boldsymbol{\mu}_c - \boldsymbol{\mu}_h) = \bP[(\bI - \Ac)^{-1}\mathbf{m}_c - (\bI - \Ah)^{-1}\mathbf{m}_h]$, which can be bounded using the resolvent identity $(\bI - \Ac)^{-1} - (\bI - \Ah)^{-1} = (\bI - \Ac)^{-1}\bDelta(\bI - \Ah)^{-1}$.

\renewcommand{\thesection}{\Alph{section}}
\refstepcounter{section}
\subsection*{Appendix B: Proof of dynamical separability Result}\label{app:B}

Under $\mathcal{H}_c$, the true dynamics are $\bz_{k+1} = \Ac\bz_k + \bxi_k^{(c)}$ with $\bxi_k^{(c)} \sim \mathcal{N}(\mathbf{m}_c, \bSigma_c)$. Write $\bxi_k^{(c)} = \mathbf{m}_c + \tilde{\bxi}_k$ where $\tilde{\bxi}_k \sim \mathcal{N}(\mathbf{0}, \bSigma_c)$, and decompose $\bz_k = \boldsymbol{\mu}_c + \tilde{\bz}_k$ where $\tilde{\bz}_k$ is the zero-mean centered process satisfying $\tilde{\bz}_{k+1} = \Ac\tilde{\bz}_k + \tilde{\bxi}_k$ with $\E[\tilde{\bz}_k] = \mathbf{0}$ and $\mathrm{Cov}(\tilde{\bz}_k) = \bG_c$.

The prediction error using the correct model is $\epsilon_{c,k}^2 = \norm{\bP\bxi_k^{(c)}}^2$, while the prediction error using the wrong model is:
\begin{equation}
    \epsilon_{h,k}^2 = \norm{\bP\bDelta\bz_k + \bP\bxi_k^{(c)}}^2 = \norm{\bP\bxi_k^{(c)}}^2 + 2(\bxi_k^{(c)})^\top\bP^\top\bP\bDelta\bz_k + \norm{\bP\bDelta\bz_k}^2.
\end{equation}
The difference of cumulative squared errors therefore decomposes as:
\begin{equation}
    S_h - S_c = \underbrace{\sum_{k=1}^{L-1}\norm{\bP\bDelta\bz_k}^2}_{\triangleq\,T_1} + \underbrace{2\sum_{k=1}^{L-1}(\bxi_k^{(c)})^\top\bP^\top\bP\bDelta\bz_k}_{\triangleq\,T_2}.
    \label{eq:ShSc_decomp}
\end{equation}
The signal term $T_1$ has expectation $\E[T_1] = (L-1)\delta^2$ in the stationary regime. Since $\bxi_k^{(c)}$ is independent of $\bz_k$, the cross term $T_2$ has expectation:
\begin{equation}
    \E[T_2] = 2(L-1)\mathbf{m}_c^\top\bP^\top\bP\bDelta\boldsymbol{\mu}_c = (L-1)\beta_c.
\end{equation}
The total expected gap is therefore $\E[S_h - S_c] = (L-1)(\delta^2 + \beta_c)$.

To establish concentration, we decompose $T_2$ into its mean and centered parts: $T_2 = (L-1)\beta_c + \widetilde{T}_2$ where
\begin{equation}
    \widetilde{T}_2 = 2\sum_{k=1}^{L-1}\left[(\bxi_k^{(c)})^\top\bP^\top\bP\bDelta\bz_k - \mathbf{m}_c^\top\bP^\top\bP\bDelta\boldsymbol{\mu}_c\right] = 2\sum_{k=1}^{L-1}\left[\tilde{\bxi}_k^\top\bP^\top\bP\bDelta\bz_k + \mathbf{m}_c^\top\bP^\top\bP\bDelta\tilde{\bz}_k\right].
\end{equation}
The centered residual $\widetilde{T}_2$ has zero mean. For the first sum in $\widetilde{T}_2$, conditionally on the trajectory $\{\bz_k\}_{k=1}^{L-1}$, the terms $\{\tilde{\bxi}_k^\top\bP^\top\bP\bDelta\bz_k\}$ are independent zero-mean Gaussians with variances $\norm{\bSigma_c^{1/2}\bP^\top\bP\bDelta\bz_k}^2 \leq \sigma_{\max,c}^2\norm{\bP\bDelta\bz_k}^2$. The conditional variance of this part is therefore bounded by $4\,\sigma_{\max,c}^2 \,T_1$. For the second sum, $\mathbf{m}_c^\top\bP^\top\bP\bDelta\tilde{\bz}_k$ is a linear functional of the geometrically mixing process $\tilde{\bz}_k$; its partial sums have variance of order $\mathcal{O}(L\norm{\bP\mathbf{m}_c}^2 R^2\lambda_{\max}(\bG_c))$.

Since $\norm{\bP\bDelta\bz_k}^2 = \bz_k^\top\bDelta^\top\bP^\top\bP\bDelta\bz_k$ is a quadratic form in the Gaussian vector $\bz_k$, and the sequence $\{\bz_k\}$ has geometrically decaying correlations under the stability assumption $\rho(\Ac) < 1$, the Hanson--Wright inequality (\cite{rudelson2013hanson}) applied to blocks of approximately independent iterates yields:
\begin{equation}
    \Prob\!\left(\left|T_1 - (L-1)\delta^2\right| > t\right) \leq 2\exp\!\left(-c_1\min\!\left\{\frac{t^2}{(L-1)R^4\lambda_{\max}(\bG_c)^2},\; \frac{t}{R^2\lambda_{\max}(\bG_c)}\right\}\right),
    \label{eq:T1_hw}
\end{equation}
where $c_1 > 0$ is a constant. The Gaussian tail bound on $\widetilde{T}_2$, combined with the mixing bound on the linear part, gives:
\begin{equation}
    \Prob\!\left(|\widetilde{T}_2| > t \;\Big|\; \{\bz_k\}\right) \leq 2\exp\!\left(-\frac{t^2}{8\sigma_{\max,c}^2 T_1 + C'\norm{\bP\mathbf{m}_c}^2R^2\lambda_{\max}(\bG_c)(L-1)}\right),
    \label{eq:T2_gauss}
\end{equation}
for a constant $C' > 0$. Restricting to the high-probability event where $T_1$ is close to its mean, setting the deviation $t = (L-1)\alpha$ with $\alpha = (\delta^2 + \beta_c)/4$, and integrating over the trajectory yields concentration of $S_h - S_c$ around $(L-1)(\delta^2 + \beta_c)$.

To pass from $S_h - S_c$ to $\Delta\mathcal{E} = \sqrt{S_h} - \sqrt{S_c}$, we use the identity:
\begin{equation}
    \Delta\mathcal{E} = \frac{S_h - S_c}{\sqrt{S_h} + \sqrt{S_c}}.
    \label{eq:DE_identity}
\end{equation}
The denominator is controlled by the concentration of $S_c$ around its mean. Since $S_c = \sum_{k=1}^{L-1}\norm{\bP\bxi_k^{(c)}}^2$ is a sum of i.i.d.\ terms with $\E[S_c] = (L-1)[\tr(\bP\bSigma_c\bP^\top) + \norm{\bP\mathbf{m}_c}^2]$, standard sub-exponential concentration gives $S_c \leq 2(L-1)[d\sigma_{\max,c}^2 + \norm{\bP\mathbf{m}_c}^2]$ with high probability. On this event, together with $S_h \leq S_c + (S_h - S_c)$, the denominator in~\eqref{eq:DE_identity} satisfies:
\begin{equation}
    \sqrt{S_h} + \sqrt{S_c} \leq 2\sqrt{S_h} \leq 2\sqrt{2(L-1)(d\sigma_{\max,c}^2 + \norm{\bP\mathbf{m}_c}^2) + (L-1)(\delta^2 + |\beta_c|)},
\end{equation}
so that:
\begin{equation}
    \mu_L \equiv \E[\Delta\mathcal{E} \mid \mathcal{H}_c] \geq \frac{\sqrt{L-1}\,(\delta^2 + \beta_c)}{8\,\sqrt{2(d\sigma_{\max,c}^2 + \norm{\bP\mathbf{m}_c}^2) + \delta^2 + |\beta_c|}} > 0.
    \label{eq:mu_lower}
\end{equation}
The same concentration applied to the fluctuations of $\Delta\mathcal{E}$ around $\mu_L$ shows that for any $\eta < \mu_L$:
\begin{equation}
    \begin{aligned}
    &\Prob\!\left(\Delta\mathcal{E} < \eta \mid \mathcal{H}_c\right)\leq \Prob\!\left(\Delta\mathcal{E} - \mu_L < \eta - \mu_L\right) \\
    &\leq 2\exp\!\left(-\frac{(L-1)(\mu_L - \eta)^2}{C_1(\sigma_{\max,c}^2(\delta^2 + |\beta_c|) + \sigma_{\max,c}^2 R^2\lambda_{\max}(\bG_c) + \norm{\bP\mathbf{m}_c}^2R^2\lambda_{\max}(\bG_c))}\right),
    \label{eq:general_eta_bound}
    \end{aligned}
\end{equation}
where $C_1 > 0$ is a constant absorbing the numerical factors from the preceding bounds. This establishes~(\ref{eq:thm2_general}).

\renewcommand{\thesection}{\Alph{section}}
\refstepcounter{section}
\subsection*{Appendix C: Proof of Corollary~\ref{cor:rate}}\label{App:C}

We compute the mean and variance of $\Delta\mathcal{E}$ by working through the auxiliary quantity $S_h - S_c$ and transferring the results via the identity $\Delta\mathcal{E} = (S_h - S_c)/(\sqrt{S_h} + \sqrt{S_c})$.

From the decomposition~(\ref{eq:ShSc_decomp}), the mean of $S_h - S_c$ under $\mathcal{H}_c$ is $\E[S_h - S_c] = \E[T_1] + \E[T_2] = (L-1)(\delta^2 + \beta_c)$. For the variance, the centered residual $\widetilde{T}_2 = T_2 - (L-1)\beta_c$ gives:
\begin{equation}
    \mathrm{Var}(S_h - S_c) = \mathrm{Var}(T_1) + \mathrm{Var}(\widetilde{T}_2) + 2\,\mathrm{Cov}(T_1, \widetilde{T}_2).
\end{equation}
The signal term $T_1 = \sum_{k=1}^{L-1}\norm{\bP\bDelta\bz_k}^2$ is a sum of quadratic forms in the geometrically mixing Gaussian process $\{\bz_k\}$. Each summand has variance $\mathcal{O}(R^4\lambda_{\max}(\bG_c)^2 + R^2\norm{\bP\bDelta\boldsymbol{\mu}_c}^2\lambda_{\max}(\bG_c))$, and the geometric mixing ensures that the effective number of independent blocks is $\Theta(L)$, so $\mathrm{Var}(T_1) = \mathcal{O}\big(L \cdot (R^4\lambda_{\max}(\bG_c)^2 + R^2\norm{\bP\bDelta\boldsymbol{\mu}_c}^2\lambda_{\max}(\bG_c))\big)$. For the second cross-term, using the law of total variance, we have:
\begin{equation}
    \begin{aligned}
    \mathrm{Var}(\widetilde{T}_2) &= \E[\mathrm{Var}(\widetilde{T}_2 \mid \{\bz_k\})] + \mathrm{Var}(\E[\widetilde{T}_2 \mid \{\bz_k\}]) \\
    &\leq 4\,\sigma_{\max,c}^2(L-1)\delta^2 + C'(L-1)\norm{\bP\mathbf{m}_c}^2R^2\lambda_{\max}(\bG_c),
    \end{aligned}
\end{equation}

where the first term bounds the conditional variance from the zero-mean noise $\tilde{\bxi}_k$, and the second bounds the variance of the linear functional of $\tilde{\bz}_k$, and $C'>0$ is a constant. In the regime where the noise-driven term dominates, we have:
\begin{equation}
    \mathrm{Var}(S_h - S_c) = \Theta\!\left(L \cdot [\sigma_{\max,c}^2 \delta^2 + \norm{\bP\mathbf{m}_c}^2R^2\lambda_{\max}(\bG_c)]\right).
    \label{eq:var_ShSc}
\end{equation}

To transfer these to $\Delta\mathcal{E}$, note that $S_c = \sum_{k=1}^{L-1}\norm{\bP\bxi_k^{(c)}}^2$ concentrates around $(L-1)[\tr(\bP\bSigma_c\bP^\top) + \norm{\bP\mathbf{m}_c}^2] = \Theta(L[\sigma_{\max,c}^2 d + \norm{\bP\mathbf{m}_c}^2])$ by the law of large numbers. The denominator in the identity $\Delta\mathcal{E} = (S_h - S_c)/(\sqrt{S_h} + \sqrt{S_c})$ therefore satisfies:
\begin{equation}
    \sqrt{S_h} + \sqrt{S_c} = \Theta\!\left(\sqrt{L}\,\sqrt{\sigma_{\max,c}^2 d + \norm{\bP\mathbf{m}_c}^2}\right),
\end{equation}
in the typical regime where the model-mismatch signal is small relative to the total noise energy. Applying the delta method to the ratio, the mean and standard deviation of $\Delta\mathcal{E}$ are:
\begin{align}
    \E[\Delta\mathcal{E} \mid \mathcal{H}_c] &= \Theta\!\left(\sqrt{L}\cdot\frac{\delta^2 + \beta_c}{\sqrt{\sigma_{\max,c}^2 d + \norm{\bP\mathbf{m}_c}^2}}\right), \label{eq:mean_DE} \\
    \sqrt{\mathrm{Var}(\Delta\mathcal{E} \mid \mathcal{H}_c)} &= \Theta\!\left(\frac{\sqrt{\sigma_{\max,c}^2\delta^2 + \norm{\bP\mathbf{m}_c}^2R^2\lambda_{\max}(\bG_c)}}{\sqrt{\sigma_{\max,c}^2 d + \norm{\bP\mathbf{m}_c}^2}}\right). \label{eq:std_DE}
\end{align}
Taking the ratio and absorbing the dimension-dependent constants:
\begin{equation}
    S(L) = \frac{\E[\Delta\mathcal{E} \mid \mathcal{H}_c]}{\sqrt{\mathrm{Var}(\Delta\mathcal{E} \mid \mathcal{H}_c)}} = \Theta\!\left(\sqrt{L}\cdot\frac{\delta^2 + \beta_c}{\sigma_{\max,c}^2}\right),
    \label{eq:dprime_derived}
\end{equation}
establishing~(\ref{eq:detectability}). For the sample complexity, the misclassification probability from Result~\ref{thm:dynamical_sep} decays as $\exp(-c\,S(L)^2) = \exp(-c\,L\,(\delta^2+\beta_c)^2/\sigma_{\max,c}^4)$. Setting this equal to $\epsilon$ and solving for $L$ gives:
\begin{equation}
    L^* = \frac{\sigma_{\max,c}^4}{c\,(\delta^2+\beta_c)^2}\log\frac{1}{\epsilon} = \mathcal{O}\!\left(\frac{\sigma_{\max,c}^4}{(\delta^2+\beta_c)^2}\log\frac{1}{\epsilon}\right),
\end{equation}
which corresponds to the desired result in~(\ref{eq:sample_complexity}).

\renewcommand{\thesection}{\Alph{section}}
\refstepcounter{section}
\subsection*{Appendix D: Proof of Cross-embedding transfer bound}\label{App:D}

Under the intertwining assumption~(\ref{eq:intertwining}), the $\boldsymbol{\theta}_2$-embedding of token $q_k$ can be written as 
$\by_k^{(\boldsymbol{\theta}_2)} = \mathbf{T}\by_k^{(\boldsymbol{\theta}_1)} + \boldsymbol{r}_k$ where $\boldsymbol{r}_k = \boldsymbol{r}(q_k)$. 
The prediction error when applying the $\boldsymbol{\theta}_1$-fitted operator $\Ah^{(\theta_1)}$ to the $\boldsymbol{\theta}_2$-trajectory is:
\begin{equation}
    \by_{k+1}^{(\boldsymbol{\theta}_2)} - \bP^{(\boldsymbol{\theta}_2)}\Ah^{(\boldsymbol{\theta}_1)}\bz_k^{(\boldsymbol{\theta}_2)} = \mathbf{T}\by_{k+1}^{(\boldsymbol{\theta}_1)} + \boldsymbol{r}_{k+1} - \bP^{(\boldsymbol{\theta}_2)}\Ah^{(\boldsymbol{\theta}_1)}\bz_k^{(\boldsymbol{\theta}_2)}.
\end{equation}
Under $\mathcal{H}_c$, the true dynamics in the
$\boldsymbol{\theta}_1$-space satisfy
$\by_{k+1}^{(\boldsymbol{\theta}_1)}
  = \bP^{(\boldsymbol{\theta}_1)}\Ac^{(\boldsymbol{\theta}_1)}
    \bz_k^{(\boldsymbol{\theta}_1)}
  + \bP^{(\boldsymbol{\theta}_1)}\bxi_k^{(\boldsymbol{\theta}_1)}$.
Substituting and rearranging, let
$\boldsymbol{\rho}_k
  \coloneqq \boldsymbol{\rho}_k\!\bigl(
    \boldsymbol{r}_k,\,
    \bP^{(\boldsymbol{\theta}_1)},\,
    \Ac^{(\boldsymbol{\theta}_1)}
  \bigr) \in \mathbb{R}^{n}$
denote the residual that depends on the observation noise
$\boldsymbol{r}_k$.
The model-mismatch component of the prediction error takes the form
\begin{equation}\label{eq:pred_error_mismatch}
  \boldsymbol{e}_k
  \;=\;
  \mathbf{T}\,\bP^{(\boldsymbol{\theta}_1)}
    \bDelta^{(\boldsymbol{\theta}_1)}
    \bz_k^{(\boldsymbol{\theta}_1)}
  \;+\;
  \boldsymbol{\rho}_k.
\end{equation}
Taking the expected squared norm and applying the reverse
triangle inequality
$\norm{\boldsymbol{a}+\boldsymbol{b}}^2
  \geq \norm{\boldsymbol{a}}^2
       - 2\norm{\boldsymbol{a}}\norm{\boldsymbol{b}}$
yields
\begin{align}
  \delta^2_{\boldsymbol{\theta}_1 \to \boldsymbol{\theta}_2}
  &\;\geq\;
    \E\!\left[
      \norm{
        \mathbf{T}\,\bP^{(\boldsymbol{\theta}_1)}
        \bDelta^{(\boldsymbol{\theta}_1)}
        \bz_k^{(\boldsymbol{\theta}_1)}
      }^2
    \right]
    \;-\;
    2\,\E\!\left[
      \norm{
        \mathbf{T}\,\bP^{(\boldsymbol{\theta}_1)}
        \bDelta^{(\boldsymbol{\theta}_1)}
        \bz_k^{(\boldsymbol{\theta}_1)}
      }\,
      \norm{\boldsymbol{\rho}_k}
    \right]
  \nonumber\\[6pt]
  &\;\geq\;
    \sigma_{\min}^2(\mathbf{T})\;\,
    \E\!\left[
      \norm{
        \bP^{(\boldsymbol{\theta}_1)}
        \bDelta^{(\boldsymbol{\theta}_1)}
        \bz_k^{(\boldsymbol{\theta}_1)}
      }^2
    \right]
    \;-\;
    C_2\,\chi^2
    \!\left(
      \norm{\bDelta^{(\boldsymbol{\theta}_1)}}^2\,
      \lambda_{\max}\!\bigl(
        \bG_c^{(\boldsymbol{\theta}_2)}
      \bigr)
      \;+\;
      \delta^2(\boldsymbol{\theta}_1)
    \right)
  \nonumber\\[6pt]
  &\;=\;
    \sigma_{\min}^2(\mathbf{T})\;\,
    \delta^2(\boldsymbol{\theta}_1)
    \;-\;
    C_2\,\chi^2
    \!\left(
      \norm{\bDelta^{(\boldsymbol{\theta}_1)}}^2\,
      \lambda_{\max}\!\bigl(
        \bG_c^{(\boldsymbol{\theta}_2)}
      \bigr)
      \;+\;
      \delta^2(\boldsymbol{\theta}_1)
    \right),
  \label{eq:delta_lower_bound}
\end{align}
where the second line uses the submultiplicativity
$\norm{\mathbf{T}\,\boldsymbol{x}} \geq
  \sigma_{\min}(\mathbf{T})\,\norm{\boldsymbol{x}}$
together with the bound
$\E\!\bigl[\norm{\boldsymbol{\rho}_k}^2\bigr]
  \leq C_2\,\chi^2\!\bigl(
    \norm{\bDelta^{(\boldsymbol{\theta}_1)}}^2\,
    \lambda_{\max}\!\bigl(\bG_c^{(\boldsymbol{\theta}_2)}\bigr)
    + \delta^2(\boldsymbol{\theta}_1)
  \bigr)$,
and the last equality identifies
$\E\!\bigl[
  \norm{
    \bP^{(\boldsymbol{\theta}_1)}
    \bDelta^{(\boldsymbol{\theta}_1)}
    \bz_k^{(\boldsymbol{\theta}_1)}
  }^2
\bigr] = \delta^2(\boldsymbol{\theta}_1)$. This establishes~(\ref{eq:cross_bound}).

The accuracy bound~(\ref{eq:cross_accuracy}) follows by substituting $\delta^2_{\boldsymbol{\theta}_1 \to \boldsymbol{\theta}_2}$ in place of $\delta^2$ in 
Result~\ref{thm:dynamical_sep} with $\eta = 0$. The cross-embedding setting introduces no structural change to the hypothesis testing framework: the test statistic $\Delta\mathcal{E}$ is still computed from prediction residuals, and the exponential 
concentration still holds provided the effective discriminability $\delta^2_{\boldsymbol{\theta}_1 \to \boldsymbol{\theta}_2}$ is positive. The total classification error under equal priors is:
\begin{equation}
    \begin{aligned}
        \Prob(\text{error}) &= \frac{1}{2}\Prob(\Delta\mathcal{E} < 0 \mid \mathcal{H}_c) + \frac{1}{2}\Prob(\Delta\mathcal{E} \geq 0 \mid \mathcal{H}_h) \\
        &\leq 2\exp\!\left(-\frac{(L-1)\left(\delta^2_{\boldsymbol{\theta}_1 \to \boldsymbol{\theta}_2}\right)^2}{C_1\!\left(\sigma_{\max,c}^2 \delta^2_{\boldsymbol{\theta}_1 \to \boldsymbol{\theta}_2} + \sigma_{\max,c}^2 R^2 \lambda_{\max}(\bG_c^{(\boldsymbol{\theta}_2)})\right)}\right),
    \end{aligned}
\end{equation}
where each term is bounded by Result~\ref{thm:dynamical_sep} applied in the $\boldsymbol{\theta}_2$-observation space with discriminability $\delta^2_{\boldsymbol{\theta}_1 \to \boldsymbol{\theta}_2}$. Since $\Prob(\text{error}) \leq 1/2$, the accuracy satisfies:
\begin{equation}
    \begin{aligned}
    \mathrm{Accuracy}_{\boldsymbol{\theta}_1 \to \boldsymbol{\theta}_2} &= 1 - \Prob(\text{error}) \\
    &\geq \frac{1}{2} + \frac{1}{2}\!\left(1 - 2\exp\!\left(-\frac{(L-1)\left(\delta^2_{\boldsymbol{\theta}_1 \to \boldsymbol{\theta}_2}\right)^2}{C_1\!\left(\sigma_{\max,c}^2 \delta^2_{\boldsymbol{\theta}_1 \to \boldsymbol{\theta}_2} + \sigma_{\max,c}^2 R^2 \lambda_{\max}(\bG_c^{(\boldsymbol{\theta}_2)})\right)}\right)\right),
    \end{aligned}
\end{equation}
which is precisely~(\ref{eq:cross_accuracy}).

\end{document}